\newtheorem{theorem}{Theorem}
\title{\LARGE \bf
Learning Deep Energy Shaping Policies for Stability-Guaranteed Manipulation
}
\author{Shahbaz A. Khader$^{1,2}$, Hang Yin$^{1}$, Pietro Falco$^{2}$ and Danica Kragic$^{1}$%
\thanks{A supplementary video can be found at https://youtu.be/5iwF-\_Ecuag.}
\thanks{$^{1}$Robotics, Perception, and Learning lab, Royal Institute of Technology, Sweden.
{\tt\small \{shahak, hyin, dani\}@kth.se}.}%
\thanks{$^{2}$ASEA Brown Boveri (ABB) Corporate Research, Sweden.
{\tt\small pietro.falco@se.abb.com}.}%
\thanks{This work was partially supported by the Wallenberg AI, Autonomous Systems and Software Program (WASP) funded by the Knut and Alice Wallenberg Foundation.}
}
\begin{document}

\maketitle
\thispagestyle{empty}
\pagestyle{empty}

\begin{abstract}
Deep reinforcement learning (DRL) has been successfully used to solve various robotic manipulation tasks. However, most of the existing works do not address the issue of control stability. This is in sharp contrast to the control theory community where the well-established norm is to prove stability whenever a control law is synthesized. What makes traditional stability analysis difficult for DRL are the uninterpretable nature of the neural network policies and unknown system dynamics. In this work, stability is obtained by deriving an interpretable deep policy structure based on the \textit{energy shaping} control of Lagrangian systems. Then, stability during physical interaction with an unknown environment is established based on \textit{passivity}. The result is a stability guaranteeing DRL in a model-free framework that is general enough for contact-rich manipulation tasks. With an experiment on a peg-in-hole task, we demonstrate, to the best of our knowledge, the first DRL with stability guarantee on a real robotic manipulator.




\end{abstract}

\section{Introduction}
\label{sct:intro}

Deep reinforcement learning (DRL) has emerged as a promising approach for autonomously acquiring manipulation skills, even in the form of complex end-to-end visuomotor policies \cite{levine2016end}. In the context of manipulation, where contact with the environment is inevitable, data-driven methods such as DRL that can automatically cope with complex contact dynamics are very appealing. DRL derives its power from expressive deep policies that can represent complex control laws. However, one aspect that is often forgotten is control stability. Although recent advances in stability related DRL \cite{berkenkamp2017safe,controlbarrier_aaai2019,Choi-RSS-20,variancereduct_icml2019} is encouraging, all of them are model-based approaches that either learn dynamics or have strong assumptions about it. Naturally, these methods would struggle in the context of robotic manipulation where highly nonlinear and possibly discontinuous contact dynamics \cite{modellearnRAL2020khader} are common.

\begin{figure}[t]
\centering
\includegraphics[width=0.48\textwidth]{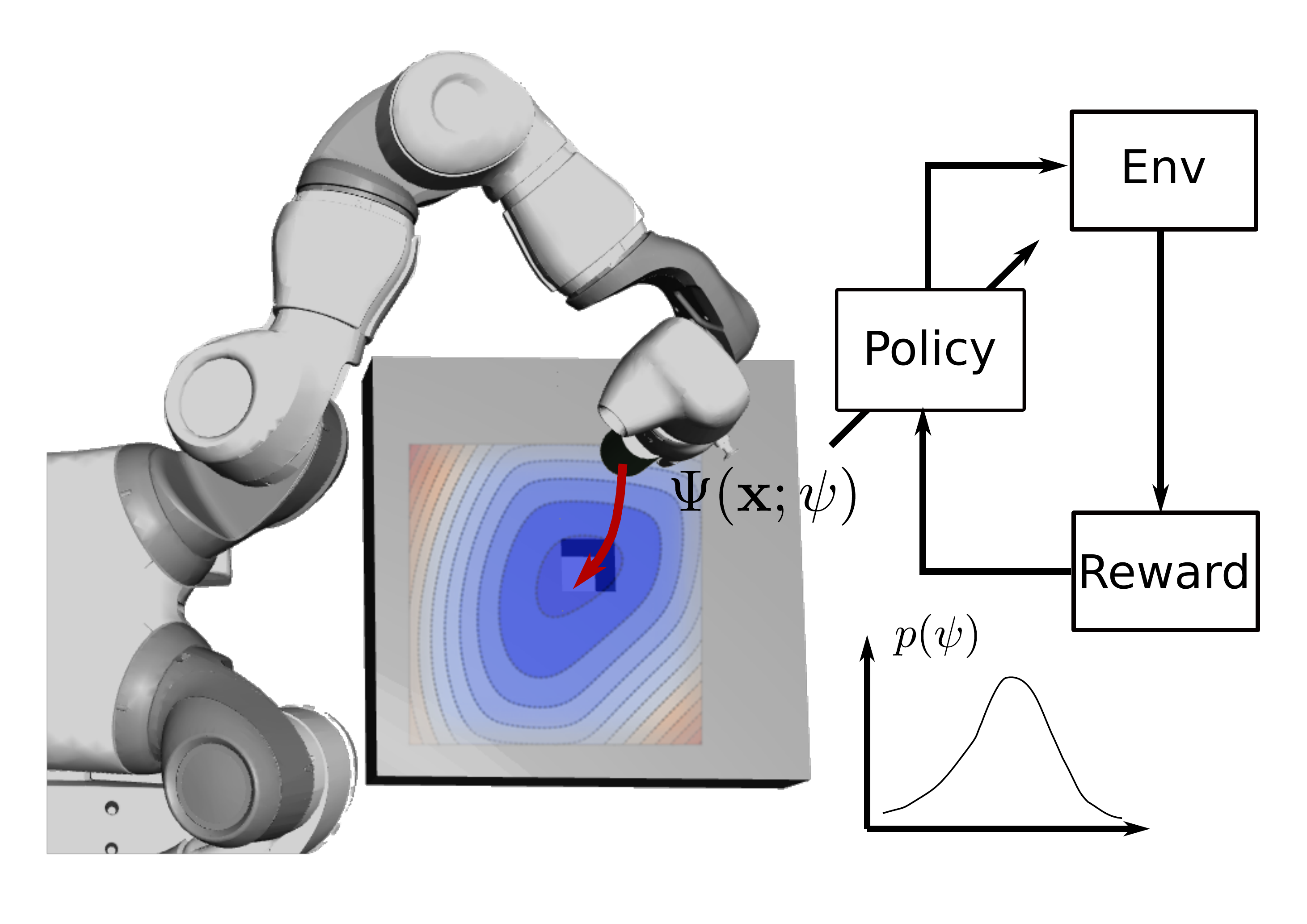}
\caption{Learning neural policies with parameterized Lagrangian energy functions; ensuring stability guarantee via exploring in the parameter space.}
\label{fig:normflow_summary}
\end{figure}

We focus on motion generation for a serial-link manipulator that already established a rigid grasp on an object. This is a well-studied problem \cite{siciliano2010robotics}, where stability is analyzed according to the Lyapunov method \cite{slotine1991applied}. Stable RL is challenging because traditional Lyapunov analysis assumes known and interpretable forms of dynamics and control law, both of which are unavailable in the DRL setting. We approach the problem by asking the question: can Lyapunov stability be guaranteed by virtue of the policy structure alone, irrespective of the robot-environment interaction and the policy parameter values? If so, stable RL can be achieved in a model-free framework that avoids learning complex contact dynamics and requires only unconstrained policy search algorithms. In our previous works \cite{stableRAL2020khader,khader2020learning}, we addressed this question by leveraging the stability property of a \textit{passive} manipulator interacting with a \textit{passive} environment, even when the environment is unknown. However, none of these methods featured policies that were fully parameterized as a deep neural network, thus potentially limiting the expressivity of the learned policies. 

In this paper, we follow \cite{stableRAL2020khader} and \cite{khader2020learning} and formulate a model-free stable RL problem for learning manipulation skills that may involve physical contact with the environment. The main contribution is the parameterization of a deep neural policy, the structure of which alone guarantees stability without the need for learning dynamics or constrained policy search. The proposed policy, called the \textit{energy shaping} (ES) policy, is derived from the first principles of Lagrangian mechanics. It consists of a position dependent convex potential function and a velocity dependent damping function; while the former is represented as an Input Convex Neural Network (ICNN) \cite{amos2017input}, the latter is a specially tailored fully connected network. We show through Lyapunov analysis that stability is guaranteed for any value of network parameters and also while interacting with any unknown but passive environment. Parameter space exploration is adopted, in the form of Cross Entropy Method (CEM)~\cite{cem-rubinstein-2003}, since the deterministic Lyapunov analysis adopted in this work allows only a deterministic policy. Our experiments, which include a real-world peg insertion task by a 7-DOF robot, validate the proposed policy structure and demonstrate the benefits of stability in RL.



\section{Related Work}
Stability has been recently considered in several RL related works. The analysis in \cite{berkenkamp2017safe} relies on smoothness properties of the learned dynamics model and, therefore, would not cope with contact-rich tasks. Another method \cite{controlbarrier_aaai2019} assumes a model prior and learns only the unactuated part of the dynamics. The work in \cite{variancereduct_icml2019} assumes an existing prior stabilizing controller while \cite{Choi-RSS-20} uses Control Barrier Function in a robust control setting. Methods such as \cite{controlbarrier_aaai2019, variancereduct_icml2019, Choi-RSS-20} require good nominal dynamics models and can only tolerate disturbances to some extent. Contact dynamics can neither be easily modelled, even nominally, nor can it be treated as a disturbance due to its magnitude and persistence. Our method depends only on a prior gravity model and neither does it make any assumptions on the contact dynamics--except that the environment is passive--nor does it learn any. 

Recently, \cite{hanstableactor2020} proposed a model-free stability guaranteed RL but it is limited to either a reward function that is the norm of the state, or to partial stability. Our method does not impose any limitations on the reward function and is based on the standard notion of Lyapunov stability. Like \cite{stableRAL2020khader} and \cite{khader2020learning}, our method also considers the stability of manipulators interacting with passive environments. Given the gravity compensation model, these methods can be considered as model-free RL. The method in \cite{stableRAL2020khader} features an analytic policy with arguably limited expressivity. The policy in \cite{khader2020learning} has part analytic structure that is not learned and part deep network that is learned. In this work, we parameterize the policy almost entirely as a deep neural network and all parts of it are learned. The policy structure in \cite{khader2020learning} is comparable to the proposed method since it also aims for stability by virtue of the policy structure alone.

Many recent methods have used the Lagrangian formulation~\cite{lutter2019iclr,zhong2020unsupervised} and also the closely related Hamiltonian formulation~\cite{greydanus2019hamiltonian,toth2020iclr}, as physics-based priors for learning dynamics models. In contrast, our method focuses on model-free reinforcement learning and is influenced by the Lagrangian physics-based prior via the principle of energy shaping control. Interestingly, \cite{zhong2020unsupervised} uses a manually designed energy shaping control while leaving policy learning to future work. Spragers et al. \cite{sprangers2014reinforcement} employ energy shaping control within an RL framework but, unlike our case, is based on \textit{port-Hamiltonian} formulation. Furthermore, the work is limited to linear-in-parameters basis function models instead of expressive deep models. The general form of the energy shaping control was also used in \cite{Khansari-Zadeh2017} for learning from demonstration, albeit without any neural network policy or reference to energy shaping of Lagrangian systems.

Parameter space exploration in the form of Evolution Strategies has recently emerged as a viable alternative to action space exploration in RL \cite{salimans2017evolution,Plappert18Parameter}. A relevant benefit for robotics is smooth trajectories during rollout \cite{ruckstiess2008state}. The simplest method in this class is the CEM method which has been previously used for policy search in \cite{mannor2003cross} and even stability-guaranteed policy search in \cite{stableRAL2020khader}. In this work, we use CEM for parameter space exploration.


\section{Preliminaries}
\label{sct:preliminaries}

\subsection{Reinforcement Learning}
\label{sct:prelim:rl}
Reinforcement learning is a data-driven approach for solving Markov decision processes (MDP) that are defined by an environment (or dynamics) model $p(\mathbf{s}_{t+1}|\mathbf{s}_{t},\mathbf{a}_{t})$, an action space $\mathbf{a}\in\mathcal{A}$, a state space $\mathbf{s}\in\mathcal{S}$ and a reward function $r(\mathbf{s}_t, \mathbf{a}_t)$. The environment is usually assumed to be unknown. In the episodic setting, the goal is to find the optimal policy $\pi_{\theta}(\mathbf{a}_t|\mathbf{s}_t)$ of the agent, that acts on the environment through the action space, that maximizes the expected sum of reward for a desired episode length $T$ 
\begin{equation}
\label{eqn:rl_opt}
    \theta^* = \underset{\theta}{\operatorname{argmax}} J(\theta);\ J(\theta) =\mathbb{E}_{\mathbf{s}_{0}, \mathbf{a}_{0}, ..., \mathbf{s}_{T}}[\sum_{t=0}^Tr(\mathbf{s}_t, \mathbf{a}_t)],
\end{equation}
Here $\{\mathbf{s}_{0}, \mathbf{a}_{0}, ..., \mathbf{s}_{T}\}$ is a sample trajectory (episode) from the distribution induced in the stochastic system.

Our formulation differs from the above since we choose the policy and the dynamics to be deterministic:  $\mathbf{a}_t=\pi_{\theta}(\mathbf{s}_t)$ and $\mathbf{s}_{t+1}=f(\mathbf{s}_t,\mathbf{a}_t)$, respectively. A trajectory distribution is still induced as a result of maintaining a distribution over policy parameters $p(\theta)$. Adapting $p(\theta)$ until it reaches the solution is called parameter space exploration~\cite{Plappert18Parameter} \cite{ruckstiess2008state}. This readily allows exploration for deterministic policies and also results in smooth trajectories \cite{ruckstiess2008state}.

\subsection{Cross Entropy Method}
\label{sct:prelim:cem}
Cross Entropy Method (CEM)~\cite{cem-rubinstein-2003} is a black-box optimization method. In the context of the RL problem, it uses a sampling distribution $q(\theta|\mathbf{\Phi})$ to generate samples $\{\theta_n \}_{n=1}^{N_s}$ and evaluates the fitness of each of them according to the reward quantity $J(\theta_n)$. The fitness information is then used to compute updates for the sampling distribution parameters $\mathbf{\Phi}$. The iterative process can formalized as:
\begin{equation}\label{eqn:cem}
    \mathbf{\Phi}^{i+1} = \underset{\mathbf{\Phi}}{\operatorname{argmax}}\sum\limits_{n}\mathbb{I}(J(\theta_n^i)) \log q(\theta_n^i|\mathbf{\Phi})   \quad     \theta_n^i \sim q(\theta|\mathbf{\Phi}^i)
\end{equation}
where the indicator function $\mathbb{I}(\cdot)$ selects only the best $N_e$ samples or \textit{elites} over all $N_s$ samples based on individual fitness $J(\theta_n)$. The update of $\mathbf{\Phi}^{i+1}$ from the samples $\{\theta_n^i \}_{n=1}^{N_e}$ is performed by maximum likelihood estimation (MLE), which is simple and straightforward for a Gaussian $q(\theta | \mathbf{\Phi})$. The index $i$ directly corresponds to iterations in RL and eventually $q(\theta|\mathbf{\Phi}^i)$ converges to a narrow solution distribution with high fitness and thus solving Eq. \eqref{eqn:rl_opt} approximately.

\subsection{Stability in RL}\label{sec:bckgrnd_stab}
Lyapunov stability is the main tool used to study stability of controlled nonlinear systems such as robotic manipulators \cite{slotine1991applied}. The system is said to be stable at an equilibrium point if the state trajectories that start close to it remain bounded in state space and asymptotically stable if they also converge to it. Such properties are necessary for any controlled system to be certified safe and predictable. To prove Lyapunov stability, an appropriate Lyapunov function $V(\mathbf{s})$--a positive definite, radially unbounded and scalar function of the state $\mathbf{s}$--has to be shown to exist with properties $V(\vec{s})$ is in $C^1$, $V(\vec{s}^*)=0$ and $\dot{V}(\vec{s})<0\ \forall \vec{s}\ne\vec{s}^*$ where $\vec{s}^*$ is the equilibrium point. Although stability analysis requires knowledge of the system dynamics, it is still possible to reason about the stability of unknown interacting systems if they can be shown to be passive. Passivity means that a system can only consume energy but not generate energy, or $\dot{V}(\vec{s})<=P_i$ where $P_i$ is the power that flows into the system.

A stable RL policy can guarantee all rollouts to be bounded in state space and eventually converge to the goal position demanded by the task. If the policy ensures that the goal position is the only equilibrium point, we can refer to the stability of the system instead of an equilibrium point. 


\subsection{Lagrangian Mechanics} \label{sct:prelim:lagrangian_mechanics}
Lagrangian mechanics can be used to describe motions of physical systems in any generalized coordinates. A robotic manipulator arm is no exception and its motion can be represented by the Euler-Lagrange equation,
\begin{align}
    \diff \left(\pardiff[\vxd]L\right)-\pardiff[\vx]L=\vu; \quad L(\vx,\vxd)=T(\vxd)-V(\vx) \label{eqn:lagrangian}
\end{align}
where $T$ and $V$ are the kinetic and potential energies, respectively, the quantity $L$ is the Lagrangian, $\vx$ is the generalized coordinates and $\vu$ is the generalized non-conservative force acting on the system. $V$ represents the gravitational potential function with its natural equilibrium point $\vx^*$, one with the least energy. For a manipulator under the influence of an external force $\vec{f}_{ext}$, Eq. (\ref{eqn:lagrangian}) can be expanded to the standard manipulator equation,
\begin{align}
    \mathbf{M}(\vx)\Ddot{\vx} + \mathbf{C}(\vx, \vxd)\vxd + \mathbf{g}(\vx) = \mathbf{u} + \vec{f}_{ext}, \label{eqn:manipultor}
\end{align}
where $\mathbf{M}, \mathbf{C}$ and $\mathbf{g}$ are the inertia matrix, the Coriolis matrix and the gravitational force vector, respectively. 

\subsection{Energy Shaping Control of Lagrangian Systems} \label{sct:prelim:energy_based_control}
The natural equilibrium point $\vx^*$ may not be our control objective, but instead it may be another desired goal $\vx_ d^*$. Furthermore, we may also be interested in shaping the trajectory while moving towards $\vx_ d^*$. A general control framework that incorporates both these aspects is \textit{energy shaping}~\cite{ortega2001putting} control. The energy shaping control law is of the form $\vu = \mathbf{\beta}(\vx)+\vec{v}(\vxd)$, where $\mathbf{\beta}(\vx)$ is the \textit{potential energy shaping} term and $\vec{v}(\vxd)$ is the \textit{damping injection} term. Potential energy shaping proposes to cancel out the natural potential function $V$ and replace it with a desired one $V_d$ such that not only is the new equilibrium point placed exactly at $\vx_ d^*$ but it also helps render the desired trajectory. The damping injection term's purpose is to bring the system to rest at $\vx_ d^*$ by avoiding perpetual oscillations. Energy shaping is as follows:
\begin{align}
    &\diff \left(\pardiff[\vxd]L\right)-\pardiff[\vx]L=\mathbf{\beta}(\vx)+\vec{v}(\vxd)\label{eqn:energy_control}\\
    &\mathbf{\beta}(\vx)=\pardiff[\vx]V(\vx)-\pardiff[\vx]V_d(\vx);\quad \vec{v}(\vxd)=-\mathbf{D}(\vxd)\vxd\label{eqn:potential_shaping}
\end{align}
where $\mathbf{D}(\vxd)\succ\vec{0}$ is a damping matrix function. Note that $\vec{v}(\vxd)=-\mathbf{D}(\vxd)\vxd$ is a particular choice for achieving $\vxd^T\vec{v}(\vxd)<0$ or dissipation. Let $V_d(\vx)$ be convex with the unique minimum at $\vx_d^*$. Then the controller
\begin{align}\label{eqn:energy_shaping_controller}
     \vu = \pardiff[\vx]V(\vx) -\pardiff[\vx]V_d(\vx)-\mathbf{D}(\vxd)\vxd
\end{align}
can potentially shape the trajectory to any arbitrary $\vx_d^*$ if appropriate $V_d(\vx)$ and $\mathbf{D}(\vxd)$ can be synthesized. The first term plays no role because it is only the gravity compensation. Unfortunately, no general solution exists for such synthesis except for some trivial cases. The benefit of this approach is that, using a Lyapunov function defined as $V_d(\vx)+T(\vxd)$, closed-loop stability and passivity can be easily achieved for any $V_d(\vx)$ and $\mathbf{D}(\vxd)$. See Sec. \ref{sct:methods:stability_proof}.

\subsection{Input Convex Neural Network (ICNN)}
\label{sct:prelim:icnn}
Fully connected ICNN or FICNN, a version of ICNN \cite{amos2017input}, has the architecture for a layer $i=0,...,k-1$
\begin{align}
    \vec{z}_{i+1} = g_i(W_i^{(\vec{z})}\vec{z}_i + W_i^{(\vec{x})}\vx + b_i)\label{eqn:ficnn}
\end{align}
where $\vec{z}_i$ represents layer activations and $\vec{z}_0, W_0^{(\vec{z})}\equiv\vec{0}$. FICNN is designed to be a scalar-valued function that is convex in its input, hence the name ICNN. Convexity is guaranteed if $W_i^{(\vec{z})}$ are non-negative and the non-linear activation functions $g_i$ are convex and non-decreasing. The remaining parameters $W_i^{(\vec{x})}$ and $b_i$ have no constraints. The network input can be any general quantity although it will only be the position coordinate in our formulation.

\section{Methodology}
\label{sct:methods}

Our goal is stability-guaranteed DRL for manipulation tasks that may involve contact with the environment. Furthermore, we would like to obtain stability by virtue of the policy structure alone so that an unconstrained policy search within a model-free framework is possible.

\subsection{Structuring Policies from Physics Prior}\label{sct:policy_param}
It was shown in Sec. \ref{sct:prelim:energy_based_control} that the control form in Eq. (\ref{eqn:energy_shaping_controller}), comprising of a convex potential function $V_d(\vx)$ with a unique minimum at $\vx_d^*$ and also a damping term $\mathbf{D}(\vxd)\succ\vec{0}$, can generate a trajectory to any arbitrary $\vx_d^*$. We shall consider $\vx_d^*=\vec{0}$ without loss of generality. As a first step, we drop the term $\pardiff[\vx]V(\vx)$ for all subsequent analysis since it is the gravity compensation control that is assumed to be readily available. We define the deterministic policy as:
\begin{align}
    \vu &=  \pi(\vx,\vxd;\eta,\psi,\phi) \nonumber\\
    &= -\pardiff[\vx]\left(\frac{1}{2}\vx^T\mathbf{S}(\eta)\vx + \Psi(\vx;\psi)\right)-\mathbf{D}(\vxd;\phi)\vxd \label{eqn:policy}\\
    &= -\mathbf{S}(\eta)\vx - \pardiff[\vx]\Psi(\vx;\psi)-\mathbf{D}(\vxd;\phi)\vxd \nonumber
\end{align}  
where $\mathbf{S}(\eta)\succ0$ contributes to a strongly convex potential function $\vx^T\mathbf{S}(\eta)\vx$, $\Psi(\vx;\psi)$ is a more flexible convex potential function and $\mathbf{D}(\vxd;\phi)$ is the damping function. Following the usual practice in energy shaping control, $\mathbf{D}(\vxd;\phi)$ does not depend on $\vx$ while $\Psi(\vx;\psi)$ depends only on $\vx$. We shall drop the parameter notations $\eta$, $\psi$ and $\phi$ whenever necessary in order to simplify notations. The deterministic policy structure shall be referred to as the energy shaping (ES) policy and is also illustrated in Fig. \ref{fig:arch}.

\begin{figure}[t]
    \centering
    \subfloat
       {\includegraphics[width=0.48\textwidth]{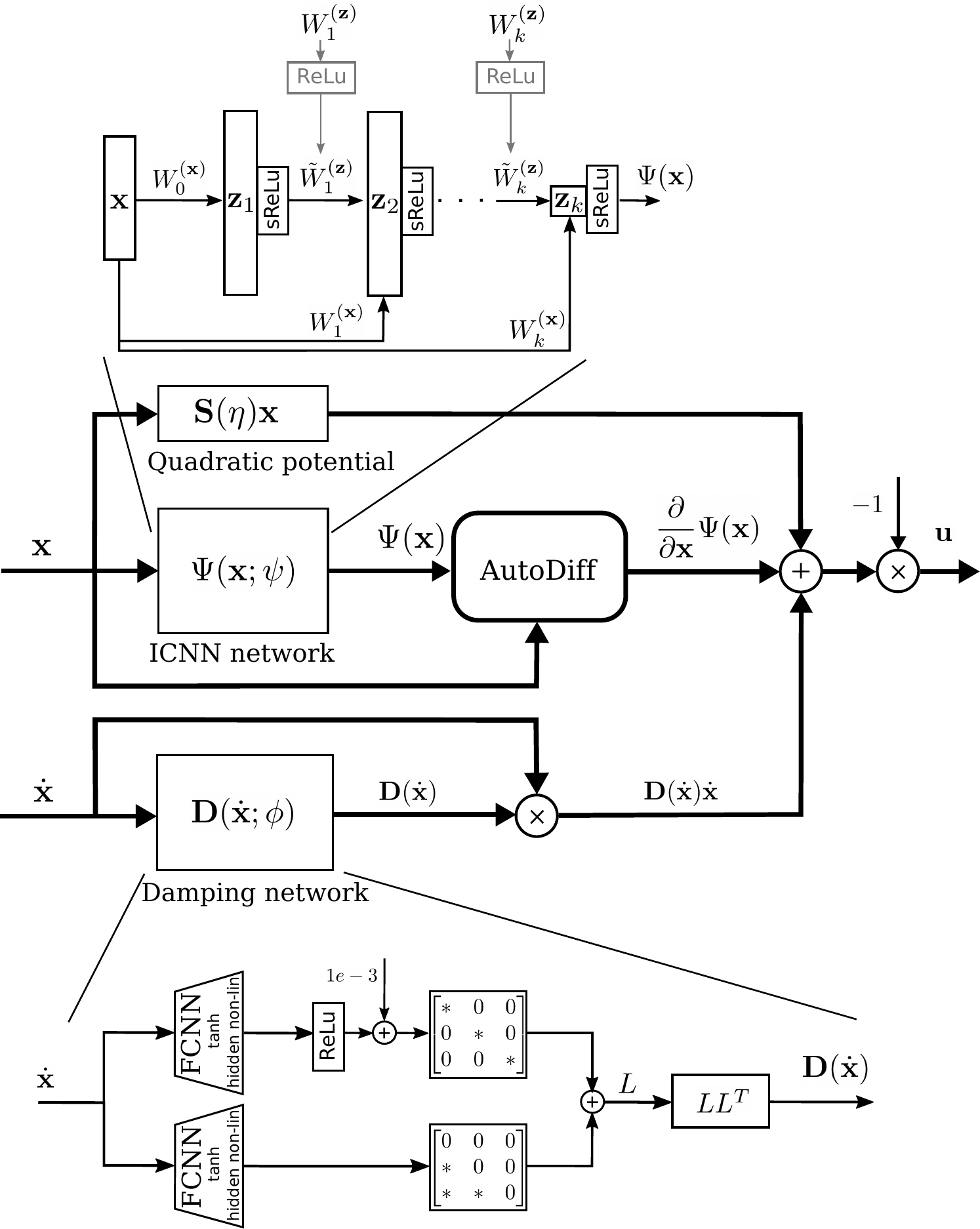}}\\
     \caption{\textbf{Network architecture for the energy shaping policy} }
     \label{fig:arch}
\end{figure}

The convex function $\Psi(\vx)$ need not have a unique minimum which then corresponds to non-asymptotic stability. If the task requires such a policy then the agent can learn a practically negligible $\mathbf{S}$ and an appropriate $\Psi(\vx)$ without a unique minimum. Alternatively, by learning a significant $\mathbf{S}$ in addition to $\Psi(\vx)$, the agent can achieve a strongly convex potential function with a unique minimum, one that corresponds to asymptotic stability. It is enough to show that $\Psi(\vx)$ is convex with minimum at $\vec{0}$ and $\mathbf{S}(\eta)\succ\vec{0}$ for any values of their respective parameters.

$\mathbf{S}(\eta)$ is defined as a diagonal matrix with the parameter $\eta$ as the diagonal elements. $\eta$ is clamped at lower and higher ends with positive numbers to ensure positive definiteness and to limit the contribution of the quadratic function.

We propose to implement $\Psi(\vx)$ using the FICNN version of ICNN \cite{amos2017input}. ICNN has been used previously to model a Lyapunov function in \cite{kolter2019learning}. In our case, the ICNN potential function will only be a part of a Lyapunov function. The non-negativity of $W_i^{(\vec{z})}$ in Eq. (\ref{eqn:ficnn}) is implemented by applying the ReLu function on the parameter itself. The convexity of $g_i$ in Eq. (\ref{eqn:ficnn}) is satisfied by adopting the smooth variant of ReLu (sReLu), introduced in \cite{kolter2019learning}, which is convex, non negative and differentiable. Differentiability of activation function is required to ensure that $\Psi(\vx)$ is continuously differentiable or class $C^1$. Both aspects are shown in Fig. \ref{fig:arch}. Unlike the intended use of FICNN, our requirement is such that $\Psi(\vx)$ should have a minimum at $\vec{0}$. For any random initialization of FICNN this is not necessarily true. We could find the minimum through any gradient based optimization and add offsets to both $\vx$ and $\Psi$ to give $\Psi(\vec{0})=0$. However, we adopt a less expensive method, one that avoids any numerical optimization, by simply setting $b_i=\vec{0}$ in Eq. (\ref{eqn:ficnn}). With $\vec{z}_0, W_0^{(\vec{z})}\equiv\vec{0}$ and $\vx=\vec{0}$: 
\begin{equation}
    \vec{z}_{i+1} = g_i(W_i^{(\vec{z})}\vec{z}_i + W_i^{(\vec{x})}\vx) = \vec{0} \quad \textit{for}\ i=0...k-1 
\end{equation}
Together with the fact that $\Psi(\vx)\ge0$, due to the sReLu activation function, it means that $\vx=\vec{0}$ is a minimum with $\Psi(\vec{0})=0$. By considering the combined potential function in Eq. (\ref{eqn:policy}), we also have $(\vx^T\mathbf{S}\vx + \Psi(\vx))>0\ \forall\vx\ne\vec{0}$. Finally, as shown in Fig. \ref{fig:arch}, the gradient $-\pardiff[\vx]\Psi(\vx)$ is obtained readily by automatic differentiation. This is an instance of inference time automatic differentiation as opposed to the usual case of training time automatic differentiation.

The damping function in Eq. (\ref{eqn:policy}) is structured as shown in Fig. \ref{fig:arch}. Two independent fully connected networks (FCNN) with \textit{tanh} hidden layer non-linearity output the elements of the damping matrix. The first and second networks output the diagonal and the off-diagonal elements, respectively. The diagonal elements are first made non negative through a ReLu activation and then positive by adding a small positive number. The output layer of the second FCNN has no non linearity. Together, the two outputs combine to form a lower triangular matrix with positive diagonal elements. The product of the lower triangular matrix with itself will produce the required positive definite damping matrix $\mathbf{D}(\vxd)$. A similar strategy was used previously for a symmetric positive definite inertia matrix in \cite{lutter2019iclr}. The two networks may also be combined into one if that is preferable. 

\subsection{Stability of the Energy Shaping Policy}\label{sct:methods:stability_proof} 
In the following theorem, we show that the proposed policy parameterization is automatically stable and passive independent of the network parameter values. 
\begin{theorem}\label{theo:stability}
Let $\mathbf{S}$ be symmetric positive definite, $\Psi(\vx)$ be convex in $\vx$, a generalized coordinate, with minimum at $\vec{0}$ and $\Psi(\vec{0})=0$, and the function $\mathbf{D}(\vxd)$ be also positive definite. Then a gravity compensated manipulator under the control of the deterministic policy $\vu = -\mathbf{S}\vx - \pardiff[\vx]\Psi(\vx)-\mathbf{D}(\vxd)\vxd$ is: (i) globally asymptotically stable at $\vx=\vec{0}$ if $\vec{f}_{ext}=\vec{0}$ and (ii) passive with respect to $\vec{f}_{ext}$ if $\vec{f}_{ext}\ne\vec{0}$. 
\end{theorem}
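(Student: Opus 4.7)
The plan is to exhibit the textbook Lyapunov candidate for energy-shaping control, namely the sum of the shaped potential energy and the kinetic energy,
\[
V(\vx,\vxd) \;=\; \tfrac{1}{2}\vx^T\mathbf{S}\vx + \Psi(\vx) + \tfrac{1}{2}\vxd^T\mathbf{M}(\vx)\vxd,
\]
and show that its derivative along closed-loop trajectories decomposes cleanly into a strictly dissipative term and the external-power term $\vxd^T\vec{f}_{ext}$. First I would check that $V$ is a legitimate Lyapunov function on $(\vx,\vxd)$: the hypothesis $\mathbf{S}\succ\vec{0}$ makes $\tfrac{1}{2}\vx^T\mathbf{S}\vx$ positive definite and radially unbounded in $\vx$; convexity of $\Psi$ with minimum value $\Psi(\vec{0})=0$ forces $\Psi(\vx)\ge 0$; and $\mathbf{M}(\vx)\succ\vec{0}$ is a standard property of the inertia matrix. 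Thus $V\in C^1$, $V(\vec{0},\vec{0})=0$, and $V\to\infty$ as $\|(\vx,\vxd)\|\to\infty$.

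Next I would differentiate $V$ along closed-loop trajectories. Substituting $\mathbf{M}\Ddot{\vx} = \vu + \vec{f}_{ext} - \mathbf{C}\vxd$ from Eq.~\eqref{eqn:manipultor} (the $\mathbf{g}(\vx)$ term is cancelled by the assumed gravity compensation) and invoking the standard skew-symmetry of $\dot{\mathbf{M}}-2\mathbf{C}$, which annihilates the quadratic-in-$\vxd$ cross terms, the kinetic part of $\dot V$ collapses to $\vxd^T(\vu+\vec{f}_{ext})$. Adding the potential contribution $\vxd^T\bigl(\mathbf{S}\vx + \pardiff[\vx]\Psi(\vx)\bigr)$ and plugging in the control law from the theorem cancels the two shaping terms exactly, leaving
\[
\dot V \;=\; -\,\vxd^T\mathbf{D}(\vxd)\vxd \;+\; \vxd^T\vec{f}_{ext}.
\]

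Both conclusions then follow quickly. For (ii), positive definiteness of $\mathbf{D}(\vxd)$ gives $\dot V \le \vxd^T\vec{f}_{ext}$, which after time integration is exactly the passivity inequality with storage function $V$ and supply rate $\vxd^T\vec{f}_{ext}$. For (i) with $\vec{f}_{ext}=\vec{0}$, $\dot V \le 0$ so trajectories are bounded and the origin is Lyapunov-stable; radial unboundedness of $V$ makes every sublevel set compact and positively invariant, allowing me to apply LaSalle's invariance principle. Any trajectory confined to $\{\dot V=0\}$ must have $\vxd\equiv\vec{0}$ and hence $\Ddot{\vx}\equiv\vec{0}$, so the closed-loop equation reduces to the purely algebraic stationarity condition $\mathbf{S}\vx + \pardiff[\vx]\Psi(\vx) = \vec{0}$.

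The step I expect to be the main obstacle is inferring $\vx=\vec{0}$ from this stationarity condition \emph{without} assuming strict or strong convexity of $\Psi$, since only plain convexity is hypothesised. My approach is to take the inner product with $\vx$ and invoke the first-order characterization of convexity at the minimum $\vec{0}$, namely $\Psi(\vec{0}) \ge \Psi(\vx) + \pardiff[\vx]\Psi(\vx)^T(\vec{0}-\vx)$, which rearranges to $\pardiff[\vx]\Psi(\vx)^T\vx \ge \Psi(\vx) \ge 0$. Substituting then gives $0 = \vx^T\mathbf{S}\vx + \pardiff[\vx]\Psi(\vx)^T\vx \ge \lambda_{\min}(\mathbf{S})\,\|\vx\|^2$, and positive definiteness of $\mathbf{S}$ forces $\vx=\vec{0}$. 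This identifies the origin as the sole element of the largest invariant set inside $\{\dot V=0\}$, and together with boundedness delivers global asymptotic stability. Since none of the steps constrain the network parameters $\eta,\psi,\phi$ beyond the structural requirements already enforced by the architecture in Sec.~\ref{sct:policy_param}, the guarantee is valid for every parameter value and every passive environment.
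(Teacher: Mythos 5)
Your proposal uses the same Lyapunov candidate as the paper ($\tfrac{1}{2}\vx^T\mathbf{S}\vx+\Psi(\vx)+\tfrac{1}{2}\vxd^T\mathbf{M}(\vx)\vxd$) and the same derivative computation (substitute the manipulator equation, cancel via skew-symmetry of $\dot{\mathbf{M}}-2\mathbf{C}$), arriving at the identical dissipation inequality, so the core is the same argument. Where you genuinely add something is in part (i): the paper simply writes $\dot V_L=-\vxd^T\mathbf{D}(\vxd)\vxd<0$ and declares global asymptotic stability, but that inequality is only strict when $\vxd\ne\vec{0}$ (it vanishes on the whole slice $\vxd=\vec{0}$ regardless of $\vx$), so $\dot V_L$ is not negative definite in the full state and asymptotic stability does not follow from the basic Lyapunov theorem alone. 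Your LaSalle step closes that gap, and your use of the first-order convexity inequality $\pardiff[\vx]\Psi(\vx)^T\vx\ge\Psi(\vx)\ge 0$ to show that $\mathbf{S}\vx+\pardiff[\vx]\Psi(\vx)=\vec{0}$ forces $\vx=\vec{0}$ is exactly the argument needed to rule out spurious equilibria under mere (not strict) convexity of $\Psi$ — a point the paper leaves implicit. Your radial-unboundedness argument ($\mathbf{S}\succ\vec{0}$ plus $\Psi\ge 0$) is also cleaner than the paper's, which appeals to the sReLu architecture rather than the theorem's hypotheses. In short: same route, but your version is the rigorous one.
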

\begin{proof}
Consider the Lyapunov function
\begin{align*}
    V_L(\mathbf{x}, \Dot{\mathbf{x}}) = \frac{1}{2}\vx^T\mathbf{S}\vx + \Psi(\vx)+\frac{1}{2}\Dot{\mathbf{x}}^T\mathbf{M}(\mathbf{x})\Dot{\mathbf{x}} \nonumber
\end{align*}
where $\mathbf{M}(\mathbf{x})$ is the inertia matrix. We already have $V_L(\vx,\vxd)$ in class $C^1$, $V_L(\vec{0},\vec{0})=0$, $V_L(\vx,\vxd)>0\ \forall\vx\ne\vec{0},\vxd\ne\vec{0}$ and as $||\vx||,||\vxd|| \rightarrow \infty \implies V_L(\vx,\vxd)\rightarrow\infty$. The first three properties are achieved by design as explained earlier. The last property follows from the fact that for large values of $\vx$ and $\vxd$, $V_L(\vx,\vxd)$ behaves like a quadratic function because $\Psi(\vx)$ only grows linearly thanks to the sReLu activation.

Taking the time derivative, 
\begin{align}
    \Dot{V}_L = \vxd^T\mathbf{S}\vx + \vxd^T\pardiff[\vx]\Psi(\vx) + \Dot{\mathbf{x}}^T\mathbf{M}(\mathbf{x})\Ddot{\mathbf{x}} + \frac{1}{2}\Dot{\mathbf{x}}^T\Dot{\mathbf{M}}(\mathbf{x})\Dot{\mathbf{x}}\nonumber
\end{align}
Substituting $\mathbf{M}(\mathbf{x})\Ddot{\mathbf{x}}$ from Eq. (\ref{eqn:manipultor}), substituting $\vu$ from Eq. (\ref{eqn:policy}), utilizing the skew-symmetric property of $\Dot{\mathbf{M}}-2\mathbf{C}$ and utilizing positive definiteness of $\mathbf{D}(\vxd)$ leads to the last required property for global asymptotic stability 
\begin{align*}
    \Dot{V}_L  = -\Dot{\mathbf{x}}^T \mathbf{D}(\vxd) \Dot{\mathbf{x}} < 0 \quad (\vec{f}_{ext}=\vec{0})
\end{align*}
When considering $\vec{f}_{ext}\ne\vec{0}$, 
\begin{align*}
    \Dot{V}_L  = \vxd^T\vec{f}_{ext} -\Dot{\mathbf{x}}^T \mathbf{D}(\vxd) \Dot{\mathbf{x}} < \vxd^T\vec{f}_{ext}
\end{align*}
which fulfills the condition for passivity that requires $\Dot{V}_L $ to be upper bounded by the flow of energy from external sources.
\end{proof}

\textit{Remark:} A Cartesian space control version of the ES policy can be implemented based on the Jacobian transpose method presented in Sections 8.6.2 and 9.2.2 of \cite{siciliano2010robotics}. In the case of redundant manipulators, where the Cartesian space is not a generalized coordinate system, stability can be preserved by adopting a slightly reformulated Lyapunov function that considers joint space kinetic energy term and joint space viscous friction. More details can be found in the above references.

\subsection{Stability-Guaranteed RL with Energy Shaping Policy}
In Sec. \ref{sct:methods:stability_proof}, we showed that our policy parameterization leads to unconditional global asymptotic stability for free motion and passivity when acted upon by an external force. When the external force is a result of physical interaction with passive objects in the environment, the overall system is also stable \cite{hogan2018impedance, colgate1988robust}. Passive objects are those that cannot generate energy and are the most common cases (see Sec. \ref{sct:disc}). Note that asymptotic stability may be lost due to motion constrained by obstacles but stability is still guaranteed. We have achieved the goal regarding policy structuring since our policy guarantees stability for potentially contact-rich tasks without any conditions or constraints for policy initialization or search. The policy search is also model-free since it does not require any learned dynamics model.

With the deep policy in place, we only require a suitable policy search algorithm. The ES policy is a deterministic policy and injecting noise into its action space would invalidate the stability proof in Theorem \ref{theo:stability}. Fortunately, it is unconditionally stable for any network parameter value and is, therefore, suitable for unconstrained parameter space exploration strategies. Since CEM (see Sec. \ref{sct:prelim:cem}) is one such strategy, we formulate a CEM based RL with the state and action defined as $\vs=[\vx^T\ \vxd^T]^T$ and $\va=\vu$, respectively, the parameter set $\theta=\{\eta, \psi, \phi\}$, and a Gaussian with diagonal covariance for the sampling distribution $q(\theta|\boldsymbol{\Phi})$.


\section{Experimental Results}

\begin{figure}[t]
    \centering
    \subfloat
       {\includegraphics[width=0.35\textwidth]{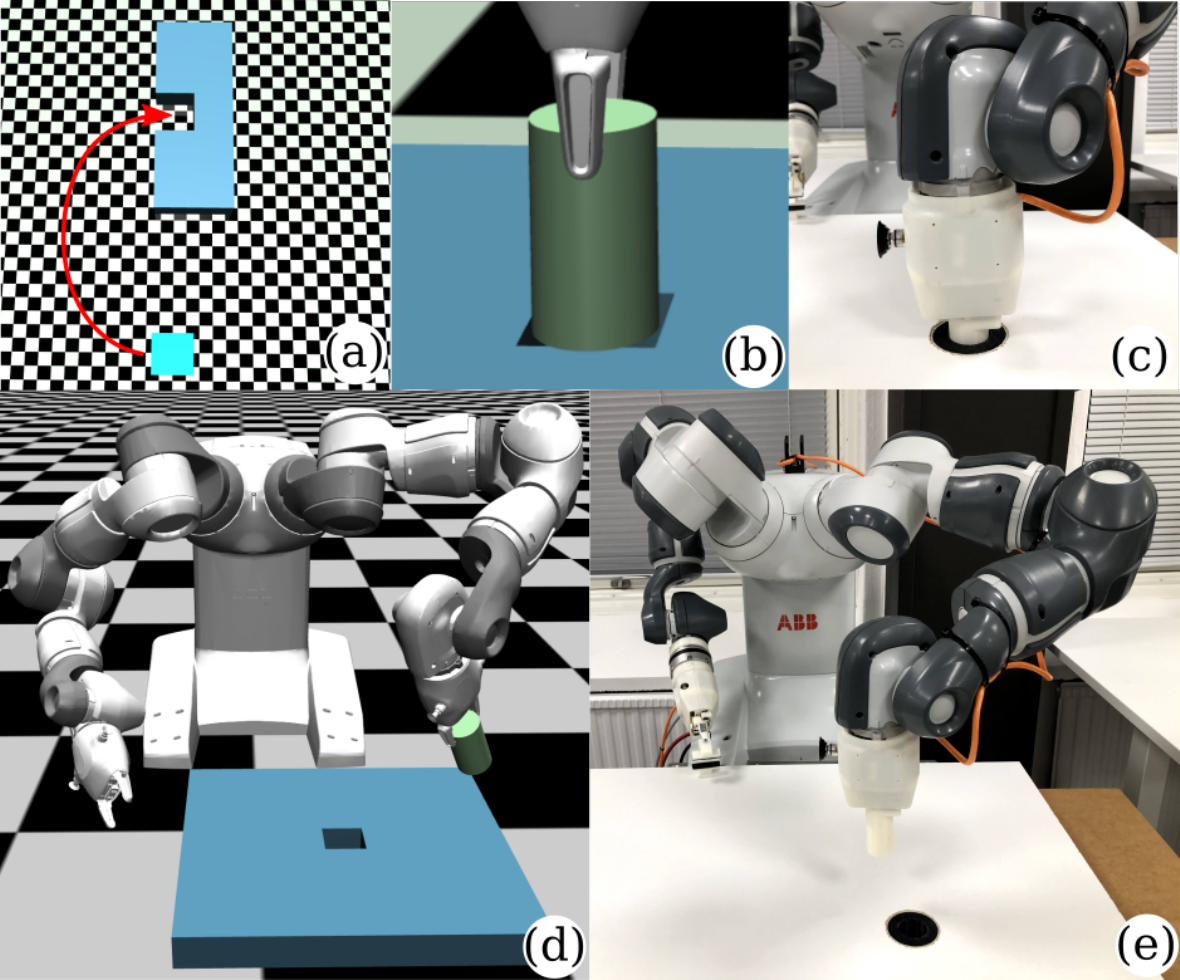}}\\
     \caption{\textbf{Experimental setup} \textbf{(a)} 2D Block-insertion: a cubical block of side $50$ mm is to be inserted into a slot with a clearance of 2 mm. The initial position and an illustrative path is also shown. \textbf{(b \& d)} Simulated peg-in-hole: a cylindrical peg of diameter of $23$ mm and a square hole of width and depth of $25$ mm and $40$ mm, respectively. \textbf{(c \& e)} Physical peg-in-hole: a cylindrical peg of diameter $27$ mm and a cylindrical hole of width and depth of $28$ mm and $40$ mm, respectively. An insertion depth of $25$ mm and $35$ mm are considered successful for the 2D block-insertion and the robotic insertion tasks, respectively.}
     \label{fig:exp_setup}
\end{figure}

For validating our method, we focus on contact-rich manipulation tasks. Contact-rich manipulation is relevant for robotics because it may be thought of as representing the superset of robot motion problems. This is because it covers both the path generation and the force interaction aspects of manipulation. Contact-rich manipulation is usually represented by various insertion tasks. A simplified representative problem is the so called peg-in-hole problem~\cite{yun2008compliant,lee2019making}. 

Adopting the experimental setups from \cite{stableRAL2020khader}, we start with a simulated 2D block-insertion task without a manipulator (Fig. \ref{fig:exp_setup}a), scale up to a 7-DOF manipulator peg-in-hole (Fig. \ref{fig:exp_setup}b,d) and then move to a physical version of peg-in-hole (Fig. \ref{fig:exp_setup}c,e). We use the MuJoCo physics simulator and a physical YuMi robot for the experiments. The proposed method will be referred to as energy shaping with CEM (ES-CEM). For baseline comparison, we use a standard deep policy with Proximal Policy Optimization \cite{ppo2017shulman} (ANN-PPO) and the normalizing flow control policy \cite{khader2020learning} with CEM (NF-CEM). We use the PyTorch version of the RL software framework \textit{garage} \cite{garage}. All simulation experiments are done with gravity compensated and five random seeds. The resulting variances are incorporated into the error bars in the plots.  

NF-CEM is the main baseline since it also has stability properties. The main idea here is to learn a complex coordinate transformation function, through RL, such that a fixed spring-damper regulator in the transformed coordinate space can solve the task. Despite the presence of the deep network, control stability is preserved. The spring-damper is idealized to be 'normal', defined as identity matrices for the stiffness ($K$) and damping ($D$) gains. 

Values of relevant hyperparameters are: number of rollouts or CEM samples per iteration $N_s= 15$; number of elite samples in CEM $N_e=3$; episode length $T=200$; initial value of $\eta$ is $0.1$; clipping values for $\eta$ are $\eta_{min}=1e-3$ and $\eta_{max}=5$; initial standard deviation for $q(\theta|\boldsymbol{\Phi})$, or $\boldsymbol{\Phi}_{\sigma}^0$, are $0.2$ and $0.15$ for ES-CEM and NF-CEM, respectively; initial action standard deviation for PPO is $2.0$; number of flow elements in NF policy $n_{flow}=2$; and dimension of flow element layers in NF policy $n_{h\_flow}=16$. Initial mean for $q(\theta|\boldsymbol{\Phi})$, or $\boldsymbol{\Phi}_{\mu}^0$, is obtained by randomly initializing all network parameters with the Pytorch functions \textit{xavier\_uniform\_} and \textit{zeros\_} for weights and biases, respectively. We used the default settings in \textit{garage} for PPO. The reward function presented in~\cite{levine2015learning} is used for all our experiments.

\subsection{2D Block-Insertion}

 \begin{figure}[t]
    \centering
    \subfloat
       {\includegraphics[width=0.49\textwidth]{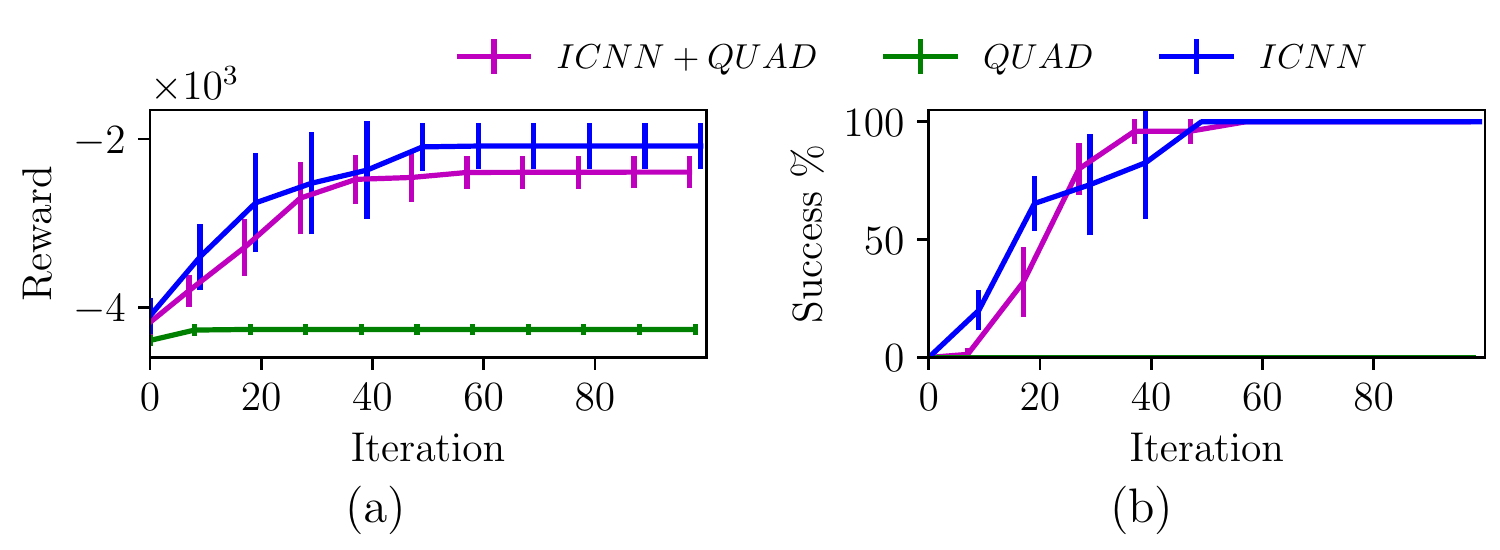}}\\
       \vspace{-2.5mm}
     \subfloat{\includegraphics[width=0.49\textwidth]{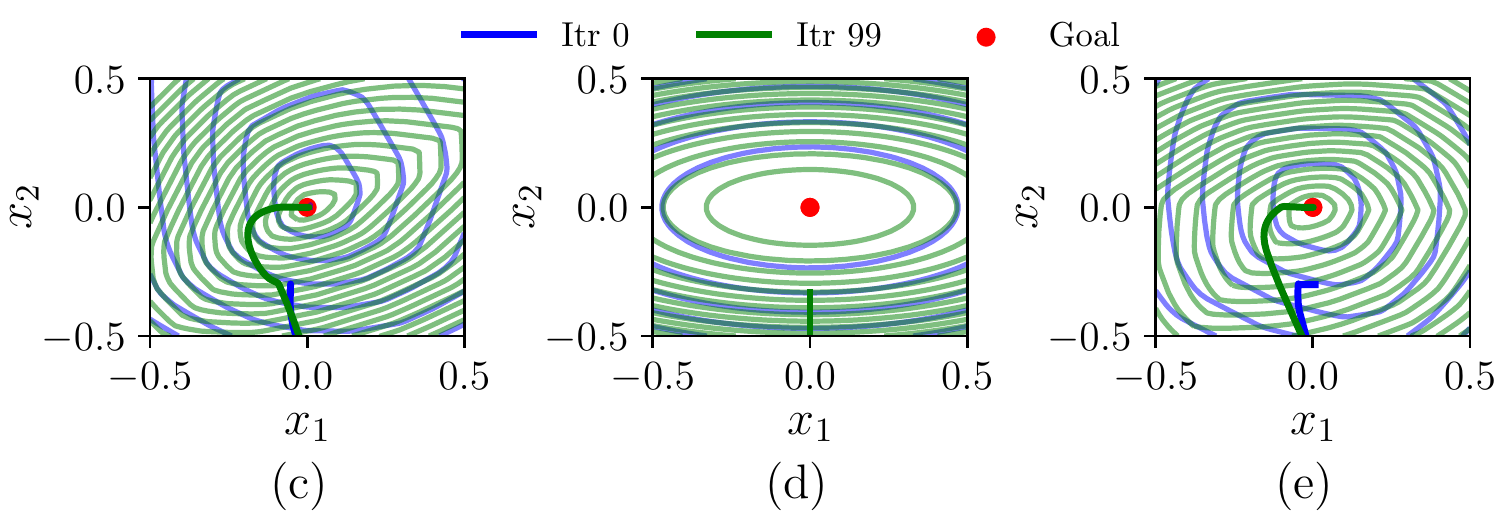}}
     \vspace{-3mm}
     \caption{\textbf{Potential Function Parameterizations for 2D Block-insertion:} RL experiments with potential functions of ICNN, quadratic function and ICNN plus quadratic function. \textbf{(a)} Learning curve \textbf{(b)} Success rate of insertions. Contour plots of potential functions with its corresponding trajectory overlaid: \textbf{(c)} ICNN, \textbf{(d)} quadratic function and \textbf{(e)} ICNN plus quadratic function. Blue corresponds to a parameter sample from iteration 0 (randomly initialized parameter distribution) and green corresponds to a parameter sample from the last iteration.}
     \label{fig:block2D_icnn_ablation}
\end{figure}

A block of mass that is permitted to only slide on a surface but not rotate is to be inserted into a slot with a small clearance. The block is controlled by applying a pair of orthogonal forces at its center of mass. The initial position is shown in Fig. \ref{fig:exp_setup}a along with an illustrative path. We used hidden layer sizes [16,16] for ICNN, [8,8] for each of the FCNN damping modules and [16,16] for ANN-PPO. 

Since we are primarily interested in the modeling capacity of ICNN, we conduct an ablation study with only ICNN, only quadratic function and the combination of both. The damping network part remains the same. In Fig. \ref{fig:block2D_icnn_ablation}a-b we see that ICNN by itself is capable of solving the task while the pure quadratic case is not. The combination model is necessary for our theoretical stability proof to hold. From Fig. \ref{fig:block2D_icnn_ablation}c-e we can see the potential function contour plots for sample cases from first and last iterations. The quadratic potential is clearly incapable of moving the block in curved paths. Even for the other two cases, the initial iteration samples are always unsuccessful.

 \begin{figure}[t]
    \centering
    \subfloat
       {\includegraphics[width=0.49\textwidth]{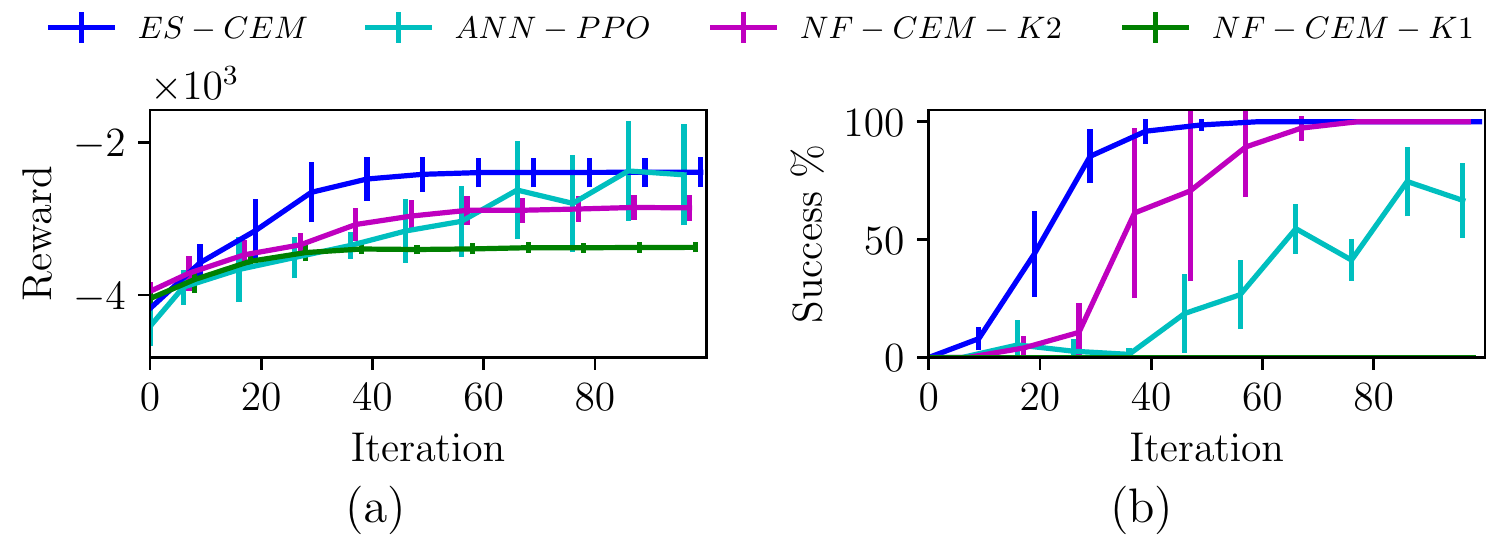}}\\
       \vspace{-4mm}
     \subfloat{\includegraphics[width=0.49\textwidth]{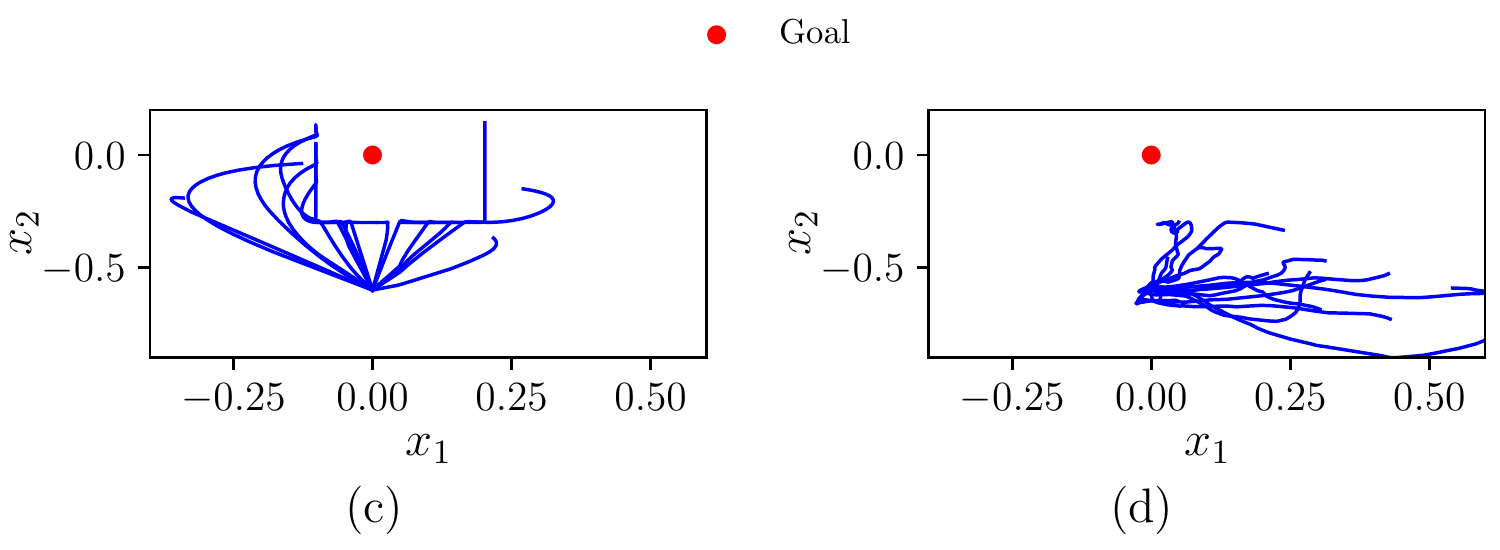}}
     \vspace{-3mm}
     \caption{\textbf{Reinforcement Learning, 2D Block-insertion:} RL results of ES-CEM along with that of the baselines. \textbf{(a)} Learning curve \textbf{(b)} Success rate of insertions. \textbf{(c)} Smooth trajectory rollouts of ES-CEM that tend towards the goal even in the first iteration. \textbf{(d)} Noisy and divergent trajectory rollouts of the standard ANN policy (action space exploration) in the first iteration.}
     \label{fig:block2D_rl}
\end{figure}

The NF policy turned out to be dependent on the latent space spring-damper system (see Fig. \ref{fig:block2D_rl}a-b). It failed to solve the task for the default value of the stiffness parameter $K=I$. However, NF-CEM succeeds for $K=2I$. We conclude that NF policy is competitive to ES policy but incurs an additional burden of extra task sensitive hyperparameters. We also compare with the ANN-PPO as a standard baseline. In Fig. \ref{fig:block2D_rl}a-b we see that ANN-PPO has the worst performance. This is consistent with results in \cite{khader2020learning} for a simpler version of the task. Following \cite{khader2020learning} and \cite{stableRAL2020khader}, we attribute the better performance of ES-CEM and NF-CEM to the ability of stable policies for forming attractors to the goal. Finally, we showcase the strength of our approach with respect to action space exploration. Fig. \ref{fig:block2D_rl}c-d show that even when randomly initialized, the trajectories induced by the ES policy tend towards the goal. The trajectories are also smooth owing to parameter space exploration.

\subsection{Simulated Peg-In-Hole}
 \begin{figure}[t]
    \centering
    \subfloat
       {\includegraphics[width=0.49\textwidth]{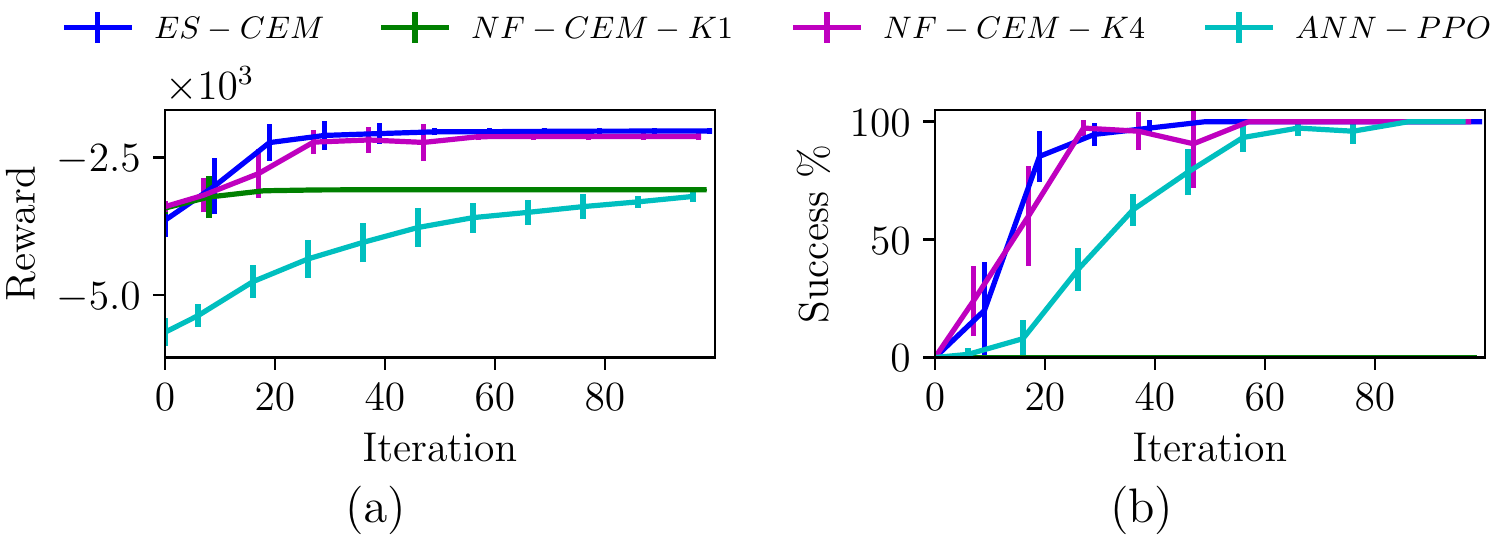}}\\
       \vspace{-3mm}
     \caption{\textbf{Reinforcement Learning, peg-in-hole:} RL results of ES-CEM along with that of the baselines. \textbf{(a)} Learning curve \textbf{(b)} Success rate of insertions.}
     \label{fig:peg_rl}
\end{figure}

A simulated 7-DOF manipulator arm is expected to learn to insert a rigidly held peg into a hole with low clearance. To simplify, only the translation part of the Cartesian space is considered for RL while the rotation part is controlled with a stable impedance controller (PD regulator) that regulates towards the goal orientation. Our method can be extended to include rotation by adopting the same formulation as \cite{stableRAL2020khader}. The initial position is a fixed but randomly generated configuration as shown in Fig. \ref{fig:exp_setup}d. We use hidden layer sizes of [24,24] for ICNN, [12,12] for each of the FCNN modules in the damper network and [32,32] for the standard ANN policy in the ANN-PPO baseline.

Fig. \ref{fig:peg_rl} also leads to the same observation that the NF policy is competitive to the ES policy only when its latent space spring-damper system is configured appropriately ($K=4I$). Such tuning was not necessary with the NF-PPO setup in \cite{khader2020learning} perhaps due to larger state space coverage. ANN-PPO again performed worse than the rest due to the same reason mentioned earlier. We notice that its performance is considerably better for the peg-in-hole task than the simpler block-insertion task. We attribute this to the compliant behaviour in the rotation space compared to the perfectly rigid interaction in the block-insertion task. 

\subsection{Physical Peg-In-Hole}

 \begin{figure}[t]
    \centering
    \subfloat
       {\includegraphics[width=0.45\textwidth]{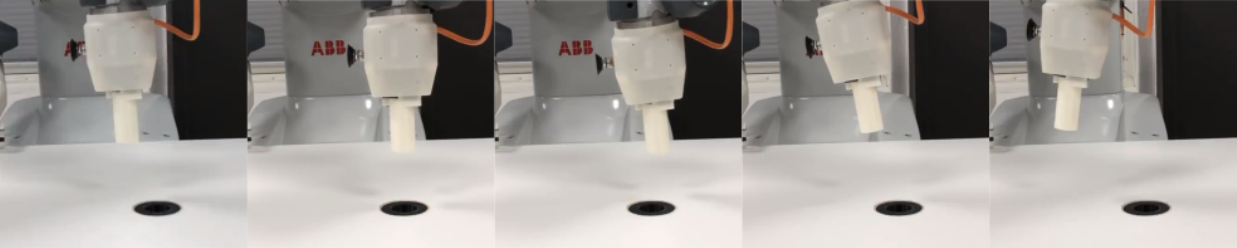}}\\
     \caption{\textbf{Samples of initial position:} drawn with mean of the training time position and standard deviation $\sigma_{\Vec{q}_0}=0.05$. The leftmost is the training position.}
     \label{fig:robustness_init_pos}
\end{figure}

 \begin{figure}[t]
 \vspace{-3mm}
    \centering
    \subfloat
       {\includegraphics[width=0.49\textwidth]{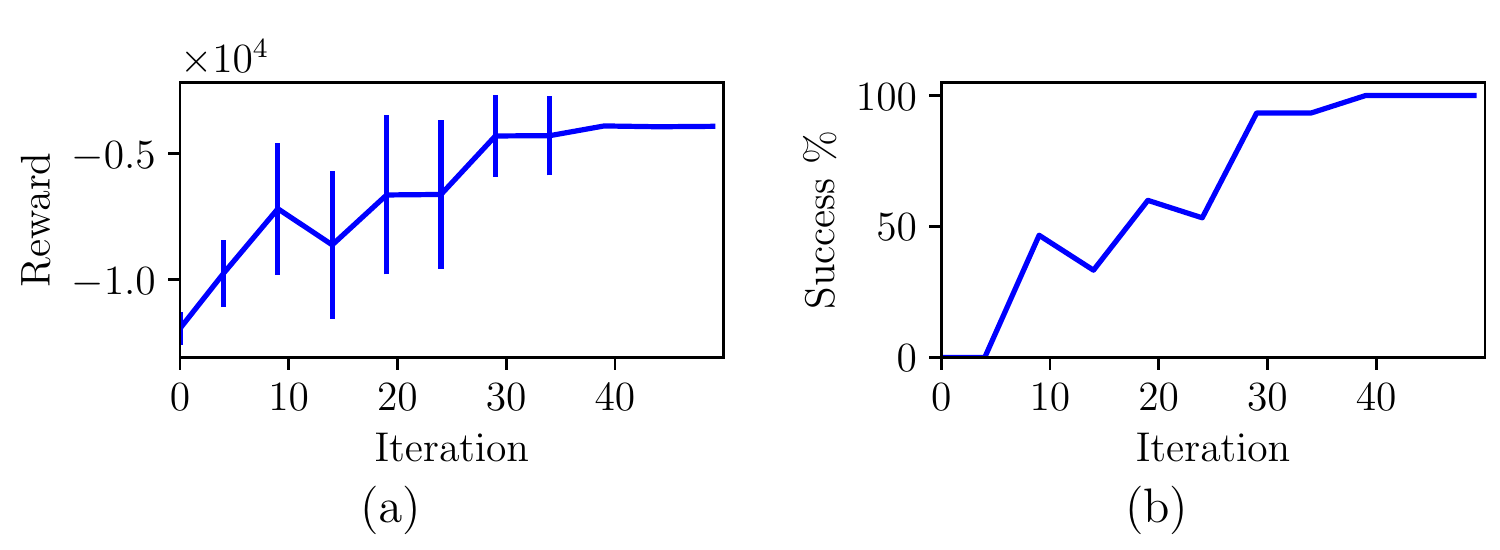}}\\
       \vspace{-3mm}
     \caption{\textbf{Reinforcement Learning, physical peg-in-hole:} RL results of ES-CEM. \textbf{(a)} Learning curve \textbf{(b)} Success rate of insertions.}
     \label{fig:peg_real_rl}
\end{figure}

\begin{table}[t]
 \caption{Generalization to Initial Position}
 \label{tab:robust_init_q}
\centering
\begin{tabular}{ |c|c|c|c|c| } 
 \hline
 $\sigma_{\Vec{q}_0}$ (rad) & $0.02$  & $0.04$ & $0.05$ & $0.1$ \\ 
 \hline
 Success \% & $100$  & $80$ & $60$ & $33.3$ \\ 
  \hline
\end{tabular}
\end{table}

In this experiment, we realize a physical version of the peg-in-hole task and evaluate only the ES-CEM case. The initial position is as shown in Fig. \ref{fig:exp_setup}e. We use identical settings and hyperparameters as of the simulated case except $\boldsymbol{\Phi}_{\sigma}^0=0.3$, $T=500$, and an insertion clearance of $1$ mm. The insertion clearance was lowered from $2$ mm to $1$ mm to increase the difficulty of the problem. Similar to the simulation case, the robot exhibited stable behaviour with trajectories tending towards the goal right from the beginning. 

In Table \ref{tab:robust_init_q}, we report the success rate for different initial state distributions after training. We conducted a set of $15$ trials for each of four initial position ($\Vec{q}_0$) distributions, whose means were the fixed position used for training and element-wise standard deviations ($\sigma_{\Vec{q}_0}$) as shown in the table. The distributions are defined for the joint position coordinates. Figure \ref{fig:robustness_init_pos} shows that the distributions did induce enough spread in the Cartesian space translation component to challenge low clearance insertion. The observed generalization to unseen initial position can be attributed to the policy's global stability.


\section{Discussions and Conclusions} \label{sct:disc}
In this paper, we investigated the possibility of achieving stability-guaranteed DRL for robotic manipulation tasks. We formulated a strategy of 1) structuring a deep policy with unconditional stability guarantee, and 2) adopting a model-free policy search that preserves stability. The first was achieved by deriving a deep policy structure from energy shaping control of Lagrangian mechanics. The second was realized by adopting the well-known parameter space search algorithm CEM. Our method is model-free as long as the manipulator is gravity compensated. Neither the stability proof nor the policy search requires the prior knowledge or learning of any model. Moreover, stability is preserved even when interacting with unknown passive environments. The method was validated on simulated and physical contact-rich tasks. Our result is notable because no prior work has demonstrated, to the best of our knowledge, a stability-guaranteed DRL on a robotic manipulator. 

The main benefit of our work is that it proposes one way of moving away from analytic shallow policies to expressive deep policies while guaranteeing stability. In addition to improving predictability and safety, stability also has the potential to reduce sample complexity in RL \cite{khader2020learning}. This is due to the fact that the goal position information is directly incorporated into the policy. In fact, due to inherent stability of the policy, it is possible to deploy state-of-the-art policy gradient algorithms such PPO instead of CEM. Although such an approach will invalidate the formal stability proof, the user will still enjoy the practical benefit of stability as was demonstrated on the NF policy in \cite{khader2020learning}. Else, more sophisticated parameter space exploration strategies such as CMA-ES \cite{hansen2001completely} and NES \cite{wierstra2014natural, salimans2017evolution} are also an option to scale up without invalidating the stability proof. Another possibility to exploit the stability property would be in a sparse reward setting since convergence behavior towards the goal can potentially minimize reward shaping.


The stability property of the proposed ES policy is shared by the previously proposed NF policy structure \cite{khader2020learning}, although the latter is not completely a deep policy. Our results indicate that the practical learning outcome of the NF policy is dependent on task specific initialization of its unlearned spring-damper system. The proposed ES policy has no such hyperparameters. The NF policy will come in handy when the spring-damper values can positively bias the policy and thus speed up the learning process, whereas the ES policy has no such mechanism to incorporate a bias.

An important consideration is the model capacity of the ES policy. While an ANN policy has the potential to approximate any nonlinear function, the same cannot be said about the ES policy. Nevertheless, it may be noted that the ES policy can potentially represent any policy within the energy shaping control form for a given fully actuated Lagrangian system. This is because the ICNN can represent any convex function \cite{amos2017input} and the damping network, parameterized by fully-connected network modules, could represent all nonlinear damping functions. As an alternative to scaling up the ICNN network, combining multiple convex functions through any of the convexity preserving operations can also be used to construct complex convex functions. However, activating a number of convex functions in sequence is not supported since that would require more sophisticated stability analysis techniques. The energy shaping control structure may very well be more limiting than an arbitrary nonlinear control law. A theoretical analysis of that nature is outside the scope of this work. Our empirical results show that the ES policy is expressive enough to learn complex contact-rich tasks. For example, the block insertion result shows that the ES policy can represent curved paths and also the necessary interaction behavior that regulates exchange of forces and exploit contacts. 

Other than the possible limitations in model capacity, few other limitations are also worth mentioning: i) If the setting of the bias terms in ICNN to zero is to be avoided, as a measure to enhance representational capacity, one would need to introduce a convex optimization step to discover the minimum of the ICNN (see Sec. \ref{sct:policy_param}). This is just an additional step and will not jeopardize the RL process. ii) The ES policy is only valid for fully actuated Lagrangian systems. This is not a serious limitation because all serial link manipulators are fully actuated. iii) The ES policy is valid only for episodic tasks with a unique goal position, thus excluding cyclic or rhythmic tasks. iv) The stability analysis can be extended to the rotational space only for the Euler angle representation. A straightforward extension can be done according to \cite{stableRAL2020khader}. Lastly, v) stability is preserved only for passive environments. This is also not a serious limitation because most objects in the environment are unactuated and hence passive. Notable exceptions are other robots, actuated mechanisms and human users.

Stability guarantee in DRL cannot be overstated. It is one of the main means to achieve safety in DRL. Although many recent works have made significant contributions in this field, very rarely do they consider a robotic manipulator interacting with the environment. As a result, most of these methods are either not suitable or do not scale well to the manipulation case. Our work can be seen as an attempt to bridge this gap.





\bibliographystyle{IEEEtran}
\bibliography{references/rl,references/rl_skill,references/rl_skill_compliant,references/other,references/imitation_learning,references/control_opt_robotics,references/model_learning,references/ml}

\end{document}